\let\temp\rmdefault
\let\rmdefault\temp
\newtheorem{theorem}{Theorem}
\newtheorem{definition}{Definition}
\newcommand{\cmark}{\contour{black}{\textcolor{green}{\ding{51}}}}
\newcommand{\xmark}{\contour{black}{\textcolor{red}{\ding{55}}}}
\newcounter{NoTableEntry}
\renewcommand*{\theNoTableEntry}{NTE-\the\value{NoTableEntry}}
\newcommand*{\notableentry}{%
  \multicolumn{1}{@{}c@{}|}{%
    \stepcounter{NoTableEntry}%
    \vadjust pre{\zsavepos{\theNoTableEntry t}}%
    \vadjust{\zsavepos{\theNoTableEntry b}}%
    \zsavepos{\theNoTableEntry l}%
    \hspace{0pt plus 1filll}%
    \zsavepos{\theNoTableEntry r}%
    \tikz[overlay]{%
      \draw[black]
        let
          \n{llx}={\zposx{\theNoTableEntry l}sp-\zposx{\theNoTableEntry r}sp},
          \n{urx}={0},
          \n{lly}={\zposy{\theNoTableEntry b}sp-\zposy{\theNoTableEntry r}sp},
          \n{ury}={\zposy{\theNoTableEntry t}sp-\zposy{\theNoTableEntry r}sp}
        in
        (\n{llx}, \n{ury}) -- (\n{urx}, \n{lly})
      ;
    }%
  }%
}
\newcolumntype{M}[1]{>{\centering\arraybackslash}m{#1}}
\title{Parameter efficient hybrid spiking-quantum convolutional neural network with surrogate gradient and quantum data-reupload}
\author[1, 3]{Luu Trong Nhan}
\author[2]{Luu Trung Duong}
\author[3]{Pham Ngoc Nam}
\author[4]{Truong Cong Thang}
\affil[1]{College of Information and Communication Technology, Can Tho University, Vietnam}
\affil[2]{Center of Digital Transformation and Communication, Can Tho University, Vietnam}
\affil[3]{College of Engineering and Computer Science, VinUniversity, Hanoi, Vietnam}
\affil[4]{Department of Computer Science and Engineering, The University of Aizu, Japan}
\begin{abstract}
The rapid advancement of artificial intelligence (AI) and deep learning (DL) has catalyzed the emergence of several optimization-driven subfields, notably neuromorphic computing and quantum machine learning. Leveraging the differentiable nature of hybrid models, researchers have explored their potential to address complex problems through unified optimization strategies. One such development is the Spiking Quantum Neural Network (SQNN), which combines principles from spiking neural networks (SNNs) and quantum computing. However, existing SQNN implementations often depend on pretrained SNNs due to the non-differentiable nature of spiking activity and the limited scalability of current SNN encoders. In this work, we propose a novel architecture—Spiking-Quantum Data Re-upload Convolutional Neural Network (SQDR-CNN)—that enables joint training of convolutional SNNs and quantum circuits within a single backpropagation framework. Unlike its predecessor, SQDR-CNN allow convergence to reasonable performance without the reliance of pretrained spiking encoder and subsetting datasets. We also clarified some theoretical foundations, testing new design using quantum data-reupload with different training algorithm-initialization and evaluate the performance of the proposed model under noisy simulated quantum environments. As a result, we were able to achieve 86\% of the mean top-performing accuracy of the SOTA SNN baselines, yet uses only 0.5\% of the smallest spiking model’s parameters. Through this integration of neuromorphic and quantum paradigms, we aim to open new research directions and foster technological progress in multi-modal, learnable systems.
\end{abstract}
\begin{document}
\flushbottom
\maketitle
\thispagestyle{empty}

\section{Introduction}
Recent advances in artificial intelligence (AI) and machine learning (ML) have transformed diverse areas of computer science. Breakthroughs such as GPT-4~\citep{peng2023instruction}, built on deep transformer-based architectures, have set new benchmarks in natural language processing and multitask learning. In parallel, diffusion-based generative models like Stable Diffusion~\citep{rombach2022high} have revolutionized image synthesis, producing diverse and high-resolution outputs. Reinforcement learning has also seen remarkable progress: systems such as AlphaCode~\citep{li2022competition} demonstrate the ability to solve complex coding problems, showcasing AI’s growing capacity for structured reasoning and problem solving.

Alongside these well-established directions, hybrid quantum–classical models are emerging as a promising frontier. By combining the strengths of quantum processors with conventional ML techniques, these approaches offer new opportunities, particularly for optimization and sampling tasks~\citep{de2022survey}. Early studies have explored quantum-enhanced neural models that operate efficiently in high-dimensional feature spaces~\citep{liu2021hybrid, fan2023hybrid}. Quantum machine learning tools such as the Variational Quantum Eigensolver (VQE)~\citep{tilly2022variational}, Hardware Efficient Ansatz~\citep{kandala2017hardware}, and data re-uploading frameworks~\citep{perez2020data} have already been incorporated into learning pipelines, paving the way for tackling problems beyond classical reach.

Spiking neural networks (SNNs)~\citep{yamazaki2022spiking, nunes2022spiking} represent another line of innovation. Inspired by biological neurons, SNNs communicate via discrete spikes, which makes them inherently energy-efficient due to their sparse activity~\citep{dampfhoffer2022snns, eshraghian2023training}. A variety of training strategies have been developed, including backpropagation through time (BPTT)~\citep{yang2019dashnet}, ANN-to-SNN conversion~\citep{nunes2022spiking}, spike-timing-dependent plasticity (STDP)~\citep{markram2011history}, and surrogate gradient methods~\citep{neftci2019surrogate, fang2021deep, fang2023spikingjelly}, all of which have improved performance on spatio-temporal tasks. More recently, hybrid architectures that integrate artificial neural networks (ANNs) with SNNs~\citep{muramatsu2023combining, chen2023hybrid, yang2019dashnet, seekings2024towards} have demonstrated the benefits of combining dense ANN feature extractors with sparsely activated SNN layers, achieving strong results in applications such as image classification~\citep{muramatsu2023combining, chen2023hybrid, seekings2024towards} and object detection~\citep{yang2019dashnet}.

An emerging area of research brings SNNs together with parameterized quantum circuits (PQCs), leveraging both the temporal encoding efficiency of spiking models and the high-dimensional computational power of quantum systems. For instance, \citet{xu2024parallel} proposed a parallel hybrid model where spiking and quantum modules operate simultaneously, improving robustness and accuracy. Other efforts include the development of quantum leaky integrate-and-fire (LIF) neurons~\citep{brand2024quantum}, enabling scalable quantum SNNs, and hybrid classifiers that use SNNs for temporal encoding with quantum circuits for projection into higher-dimensional spaces~\citep{ajayan2021edge}.

One notable recent contribution in this domain is the shallow hybrid quantum–spiking architecture proposed by~\citet{konar2023shallow}, which showed strong performance on standard vision benchmarks, particularly under noisy conditions, outperforming both CNNs and SNNs. However, existing studies fall short of providing a comprehensive theoretical framework and systematic benchmarking of proposed model under diverse training conditions. Combined with recent advances in PQC design, this gap motivates us to further develop new models building upon this foundational knowledge. Building upon this prior work, our study introduces a more advanced architecture—\textbf{Spiking-Quantum Data Re-upload Convolutional Neural Network (SQDR-CNN)}—that incorporates enhanced optimization strategies and modern architectural innovations. This model integrates spiking convolutional layers trained via surrogate gradients~\citep{fang2021deep} with a quantum data re-uploading classifier~\citep{perez2020data}, which acts as a universal function approximator within the quantum domain. Our main contributions are summarized as follows:
\begin{itemize}
    \item We introduce SQDR-CNN, a theoretically motivated architecture that fuses spiking convolutional networks with quantum classifiers based on data re-uploading techniques (which was unexplored by prior researches under hybrid spiking-quantum model design), yielding strong performance across standard benchmarks. As a result, we were able to achieve 86\% of the mean top-performing accuracy of the SOTA SNN baselines, yet uses only 0.5\% of the smallest spiking model’s parameters.
    \item We benchmark our model against established SNN architectures, demonstrating that SQDR-CNN achieves comparable accuracy with a considerably smaller parameter set.
    \item We establish theoretical basis for some practice that are utilized in prior work without clear explanation.
    \item We evaluate the robustness and generalization capacity of SQDR-CNN under varying training regimes, including different optimizers, noisy circuit components and initialization schemes.
\end{itemize}

A more comprehensive comparison of our contributions with respect to prior works can be summarized in Table \ref{tab:contr}.
\begin{table*}[t!]
    \centering
    \caption{Contribution comparison table of prior works with respect to ours.}
    \resizebox{\textwidth}{!}{
    \begin{tabular}{|M{1.8cm}|*{5}{M{2.5cm}|}} 
    \hline
    \multirow{3}{*}{\shortstack[c]{Related\\research}} & \multicolumn{5}{c|}{Contributions}\\
    \cline{2-6}
     & Training with e2e spike-based backprop & Tackled full dataset & NISQ noise models benchmarking & Initialization benchmarking & Optimization algorithm benchmarking\\
     \hline
     \citet{xu2024parallel} & \cmark & \cmark & \xmark & \xmark & \xmark \\
    \hline
     \citet{ajayan2021edge} & \xmark & \cmark & \cmark & \xmark & \xmark \\
    \hline
    \citet{sun2021quantum} & \cmark & \cmark & \xmark & \xmark & \xmark \\
    \hline
    \citet{kristensen2021artificial} & \xmark & \xmark & \xmark & \xmark & \xmark \\
    \hline
    \citet{konar2023shallow} & \cmark & \xmark & \xmark & \xmark & \xmark \\
    \hline
    \bf Our work & \cmark & \cmark & \cmark & \cmark & \cmark \\
    \hline
    \end{tabular}}
    \label{tab:contr}
\end{table*}
\section{Related works}
Quantum information science (QIS) has emerged as a promising field with the potential to significantly accelerate computations across various disciplines, including machine learning and neural network models~\citep{konar2023shallow, luu2024universal}. Yet, realizing powerful quantum neural networks is hindered by the limitations of present-day quantum hardware, particularly due to decoherence and noise in quantum gates~\citep{schuld2015introduction}. Consequently, current research has increasingly turned towards hybrid quantum–classical approaches, which are better suited to the constraints of Noisy Intermediate-Scale Quantum (NISQ) devices~\citep{preskill2018quantum}, allowing for efficient processing of high-dimensional inputs even with a limited number of qubits.

Variational quantum circuits (VQCs) form the backbone of many such hybrid learning algorithms and have shown practical effectiveness across a range of tasks despite the hardware constraints. Nevertheless, the structural complexity of conventional quantum neural architectures—such as quantum recurrent networks~\citep{bausch2020recurrent}, quantum Hopfield networks~\citep{rebentrost2018quantum}, and quantum convolutional models~\citep{cong2019quantum, henderson2020quanvolutional}—continues to present obstacles for scalable quantum information processing.

Recent efforts have explored the integration of quantum computing principles with SNNs, resulting in several early-stage frameworks. For instance, \citet{kristensen2021artificial} proposed a quantum neuron model based on the time evolution of tunable Hamiltonians, enabling signal transformation via quantum amplitudes and measurement-induced changes. Similarly, \citet{sun2021quantum} introduced the Quantum Superposition Spiking Neural Network (QS-SNN), which encodes spatiotemporal spikes within quantum states to improve robustness under noise. Another approach by \citet{ajayan2021edge} combined SNN dynamics with quantum computation in a hybrid image classification framework.

Despite these conceptual advancements, most existing models~\citep{kristensen2021artificial, sun2021quantum, ajayan2021edge} are theoretical in nature and have not yet been deployed on actual quantum processors or full-scale simulators. More applied work includes the quantum circuit-based SNN model by \citet{chen2022accelerating}, tested on binary MNIST classification \citep{deng2012mnist}, and the hybrid SQNN model by \citet{konar2023shallow}, which incorporated a pretrained SNN encoder for image data. However, both approaches are still limited by scalability issues and the current hardware restrictions on circuit depth and qubit availability.

\section{Architecture}
\subsection{SNN Encoder}
The incorporation of convolutional layers into hybrid quantum models has become a common approach to reduce the input dimensionality of vision datasets \citep{pan2023hybrid, liu2021hybrid, mishra2023qsurfnet}. This integration allows for efficient feature extraction, enhancing the performance of hybrid quantum neural networks in image processing tasks. Convolutional layers capture local spatial hierarchies in data, reducing dimensionality while retaining critical features. As a result, hybrid quantum-classical architectures leveraging convolutional layers can benefit from improved representation learning, enabling better generalization and robustness in various applications.

To encapsulate image information using convolution operations, we employ a standard convolutional encoder design inspired by well known image processing architecture \citep{krizhevsky2012imagenet, szegedy2015going, simonyan2014very}, with layers of convolution followed by batch normalization and max pooling (for more information please refer to Figure \ref{fig:model}). This structured approach ensures that the extracted features are well-normalized and efficiently reduced in size. The final feature maps are then converted into a compact vector representation using adaptive pooling \citep{gholamalinezhad2020pooling}, which is subsequently forwarded to the PQC model. 

\begin{figure*}[t]
\centerline{\includegraphics[width=\linewidth]{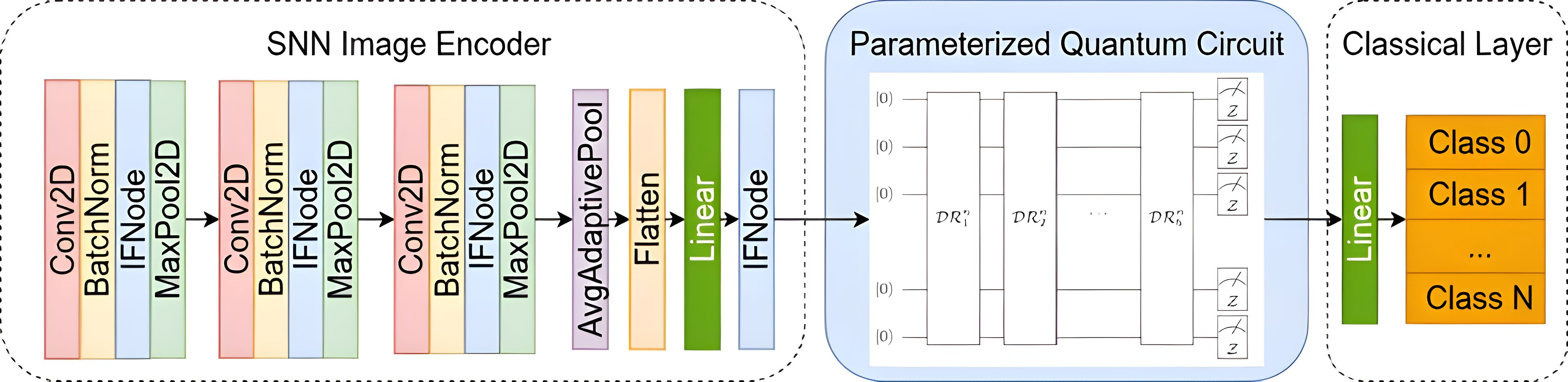}}
\caption{Architecture of our SQDR-CNN. Inputs are first processed by the SNN-based encoder, generating a feature vector that is passed to our PQC for computation. The measurement results from the PQC are then fed into a final classical MLP layer, which produces the prediction output.}
\label{fig:model}
\end{figure*}

Furthermore, all weight kernels used in the encoder are trained with surrogate gradient \citep{fang2021deep} to all Integrate-and-Fire (IF) neurons, utilizing the arctan surrogate function, which had been deeply studied in prior works for it efficiency and performance in the context of surrogate training \citep{fang2021deep, eshraghian2023training, fang2021incorporating, luu2025hybrid, luu2025accuracy}. Its derivative, $\partial \Theta/\partial x$, for the Heaviside function $\Theta(x, V_{th})$ with potential voltage $V_{th}$ and input $x$ during backpropagation is defined as:
\begin{equation}
    \begin{aligned}
        \Theta(x, V_{th}) &=
        \begin{cases}
        1 & \text{if } x - V_{th} \geq 0 \\
        0 & \text{otherwise}
        \end{cases}, \quad \frac{\partial \Theta}{\partial x} = \frac{\alpha}{2\left( 1 + \left( \frac{\pi}{2} \alpha x \right)^2 \right)} \quad \text{where $\alpha=2$}
        \end{aligned}
    \label{heviside}
\end{equation}
The use of spiking neurons introduces biologically inspired dynamics, which can enhance feature extraction and learning efficiency \citep{fang2021deep, eshraghian2023training, fang2021incorporating, nunes2022spiking, clanuwat2018deep, cheng2020lisnn}. The surrogate gradient method allows for effective backpropagation in spiking neural networks, facilitating stable and efficient training within the hybrid quantum-classical framework.

\subsection{Quantum data-reupload PQC} \label{pqc_design}
A critical challenge in designing an efficient architecture lies in identifying an optimal PQC structure. Poorly chosen PQC designs are susceptible to the quantum barren plateau problem \citep{mcclean2018barren}, make model training inefficient and hinder overall performance in hybrid quantum design. Addressing this issue, our work explores various PQC designs that can function as effective feature classifiers while mitigating the barren plateau phenomenon.

Among many commonly studied PQC architectures, the Hardware Efficient Ansatz (HEA) \citep{kandala2017hardware, park2024hardware, leone2024practical} stand out for their widely usage as one of the founding optimizable quantum model. HEA prioritizes hardware adaptability by using shallow circuit depths and simple rotation unitaries $\hat{u}$, the formula of a HEA $\hat{U}_{\text{HEA}}$ can be written as:
\begin{equation}
    \begin{aligned}
        & \hat{U}_{\text{HEA}}(\boldsymbol{\theta}) \\
        & = 
        \prod_{i=1}^{N_q} \hat{u}_{R_z}(\theta_{i, N_L})
        \times
        \hat{u}_{\text{Ent}}
        \times
        \prod_{i=1}^{N_q} \hat{u}_{R_x}(\theta_{i, N_L-1})
        \times
        \hat{u}_{\text{Ent}} \cdots
        \prod_{i=1}^{N_q} \hat{u}_{R_z}(\theta_{i, 1})
        \times
        \hat{u}_{\text{Ent}} 
        \times
        \prod_{i=1}^{N_q} \hat{u}_{R_z}(\theta_{i, \text{cap}})
    \end{aligned}
\end{equation}
where \( N_q \) is the number of qubits, \( N_L \) is the number of layers, and index \( l \) labels the layers (indexing is in reverse order). The final layer (\( l = 1 \)) is terminated with another set of single qubit rotations (the cap or "capping" block).

We also have to consider the original PQC design of SQNN from \citet{konar2023shallow} where the initial unitary group \( U_{\text{enc}} \) applied angular encoding to the initial states \( |0\rangle^{\otimes N} \) independently:
\[
U_{\text{enc}}(\boldsymbol{\omega}) = \bigotimes_{i=0}^{N} \left( R_Z(\omega_i) \cdot H \right)
\]
The superposition states are then forward to trainable unitaries layer $U_{\text{train}}$:
\[
U_{\text{train}}(\boldsymbol{\omega}) = \prod_{(i,j) \in E} \text{CNOT}_{i,j} \cdot \left( \bigotimes_{k=0}^{N} \mathbb{M}_k(\omega_k) \right)
\]
where unitary \( \mathbb{M}(\omega) = R_Z(\omega) R_X(\omega) \) and \( E \) is the set of control-target qubit indices corresponding to the CNOT connections in the circuit. The final circuit is given by:
\[
U(\boldsymbol{\omega}) = U_{\text{train}}(\boldsymbol{\omega}) \cdot U_{\text{enc}}(\boldsymbol{\omega})
\]

Beyond existing designs, we propose exploring a novel PQC framework: quantum reuploading \citep{perez2020data}. This approach has demonstrated significant potential in prior studies, particularly in quantum data reupload applications. Notably, researches had shown that quantum reuploading enabled efficient multiclass classification using a single qubit \citep{wach2023data, easom2021depth}. By repeatedly encoding classical data into quantum states, the method overcomes the data limitation of small-scale quantum systems, providing an effective and resource-efficient classifier design. 
\begin{equation}
    U(\vec{\phi}, \vec{x}) \equiv U(\phi_1) U(\vec{x}) \cdots U(\phi_N) U(\vec{x})
    \label{qdr}
\end{equation}

As noted in previous research \citep{perez2020data}, single-qubit data reupload classifier introduced can approximate any classification function up to arbitrary precision based on Universal Approximation Theorem (UAT) \citep{perez2020data, hornik1991approximation}:
\begin{theorem}[Hornik et al.]
Let \( I_m = [0, 1]^m \) denote the \( m \)-dimensional unit cube, and let \( \mathcal{C}(I_m) \) represent the space of continuous real-valued functions on \( I_m \). Consider a function \( \varphi: \mathbb{R} \to \mathbb{R} \) that is nonconstant, bounded, and continuous. For any continuous target function \( f : I_m \to \mathbb{R} \) and for any desired approximation accuracy \( \varepsilon > 0 \), there exists an integer \( N \in \mathbb{N} \) and a function \( h : I_m \to \mathbb{R} \) of the form:
\[
h(\vec{x}) = \sum_{i=1}^N \alpha_i \, \varphi(\vec{w}_i \cdot \vec{x} + b_i),
\]
where \( \alpha_i, b_i \in \mathbb{R} \) and \( \vec{w}_i \in \mathbb{R}^m \) for all \( i = 1, \dots, N \), such that the function \( h \) approximates \( f \) to within \( \varepsilon \) uniformly over \( I_m \):
\[
|h(\vec{x}) - f(\vec{x})| < \varepsilon, \quad \text{for all } \vec{x} \in I_m.
\]
\end{theorem}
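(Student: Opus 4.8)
The plan is to show that the linear span
\[
\Sigma(\varphi) = \operatorname{span}\left\{\, \vec{x}\mapsto \varphi(\vec{w}\cdot\vec{x}+b) \;:\; \vec{w}\in\mathbb{R}^m,\ b\in\mathbb{R} \,\right\}
\]
is dense in $\big(\mathcal{C}(I_m),\|\cdot\|_\infty\big)$; once density is established, the existence of $N$, $\alpha_i$, $\vec{w}_i$, $b_i$ with $\|h-f\|_\infty<\varepsilon$ is immediate. I would prove density in two stages: first reduce the $m$-dimensional statement to a one-dimensional approximation problem via the Stone--Weierstrass theorem, and then settle the one-dimensional problem constructively using the three hypotheses on $\varphi$ (bounded, nonconstant, continuous).

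For the reduction, consider the family of cosine ridge functions $\{\vec{x}\mapsto\cos(\vec{w}\cdot\vec{x}+b)\}$ and let $A$ be its linear span. Using the product-to-sum identity $\cos P\cos Q=\tfrac12[\cos(P+Q)+\cos(P-Q)]$, the set $A$ is closed under multiplication, contains the constants (take $\vec{w}=\vec{0}$), and separates points of $I_m$ (for $\vec{x}\neq\vec{y}$ choose $\vec{w}=c(\vec{x}-\vec{y})$ with $c$ small). The real Stone--Weierstrass theorem then gives that $A$ is dense in $\mathcal{C}(I_m)$, so it suffices to approximate a single cosine ridge function $\cos(\vec{w}\cdot\vec{x}+b)$. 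Since $\cos(\vec{w}\cdot\vec{x}+b)=g(\vec{w}\cdot\vec{x})$ with $g(t)=\cos(t+b)$ continuous on the compact interval $J=\{\vec{w}\cdot\vec{x}:\vec{x}\in I_m\}$, everything reduces to the claim that finite combinations $\sum_i\alpha_i\,\varphi(a_it+b_i)$ are uniformly dense in $\mathcal{C}(J)$; absorbing the scalars $a_i$ into $\vec{w}$ recovers the required network form.

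The one-dimensional density claim is the technical heart, and it is where I expect the main obstacle to lie, because $\varphi$ is assumed only bounded, nonconstant, and continuous---with no derivative, monotonicity, or limiting behavior at $\pm\infty$ to exploit. The strategy I would follow is a mollification-plus-finite-difference argument. First, for any $\psi\in C_c^\infty(\mathbb{R})$ the convolution $\varphi*\psi$ lies in the uniform-on-compacta closure $\overline{\Sigma}$ of the span of dilates and translates of $\varphi$, since its Riemann sums $\sum_j\varphi(t-s_j)\psi(s_j)\Delta s_j$ are such combinations and converge uniformly on compact $t$-sets (using boundedness of $\varphi$ and uniform continuity of $\varphi$ on the relevant compact range). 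Choosing $\psi$ appropriately makes $\Phi:=\varphi*\psi$ a bounded, smooth, nonconstant function lying in $\overline{\Sigma}$.

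Finally, I would extract all monomials from $\Phi$. Expanding a dilate-translate $\Phi(t_0+ht)=\sum_{k\ge0}\tfrac{\Phi^{(k)}(t_0)}{k!}h^k t^k$ and taking suitable finite-difference combinations in the parameter $h$ isolates, for each $k$, the term proportional to $\Phi^{(k)}(t_0)\,t^k$; each such combination is again an element of $\overline{\Sigma}$ because it is a finite linear combination of dilate-translates of $\Phi$, hence of $\varphi$. The key observation that makes this work is that $\Phi$ cannot be a polynomial: a bounded polynomial is constant, whereas $\Phi$ was chosen nonconstant, so $\Phi^{(k)}\not\equiv0$ for every $k\ge1$ and a point $t_0$ with $\Phi^{(k)}(t_0)\neq0$ always exists. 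Thus every monomial $t^k$, and hence every polynomial, lies in $\overline{\Sigma}$; the classical Weierstrass theorem then yields $\mathcal{C}(J)\subseteq\overline{\Sigma}$, completing the one-dimensional step and, through the reduction above, the full theorem. The delicate points to handle carefully will be the uniform convergence of the mollifying Riemann sums on the correct compact sets and the verification that the finite-difference extraction stays within the closed span while converging uniformly.
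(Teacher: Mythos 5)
Your proposal is sound, but there is nothing in the paper to compare it against: the paper does not prove this theorem at all — it is quoted as imported background, with the proof deferred entirely to \citet{hornik1991approximation}. What you have written is therefore a self-contained alternative derivation, and it follows a genuinely different route from Hornik's original argument. Your strategy — Stone--Weierstrass applied to the algebra spanned by the cosine ridge functions $\cos(\vec{w}\cdot\vec{x}+b)$ to reduce the multivariate statement to density on a compact interval $J$, then mollification $\Phi=\varphi*\psi$ via uniformly convergent Riemann sums to gain smoothness without leaving the closed span, then finite differences in the dilation parameter to extract each monomial $t^k$ from the Taylor expansion of $\Phi(t_0+ht)$, then the classical Weierstrass theorem — is essentially the Leshno--Lin--Pinkus--Schocken argument, and your pivotal observation is exactly the right one: a bounded nonconstant smooth function cannot be a polynomial, so $\Phi^{(k)}\not\equiv 0$ for every $k$, which is the only place boundedness and nonconstancy of $\varphi$ are used. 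This route buys strictly more than the stated theorem (density for any continuous non-polynomial activation, in constructive form), whereas Hornik's proof is functional-analytic, establishing $L^p$-density first and passing to uniform density on compacta, and yields no explicit approximants. Two small points to tighten, neither of which is a gap: point separation for the cosine algebra needs the bias as well as $\vec{w}=c(\vec{x}-\vec{y})$, since $\cos$ is even and the case $\vec{w}\cdot\vec{x}=-\vec{w}\cdot\vec{y}$ defeats $b=0$, though varying $b$ forces $\vec{w}\cdot\vec{x}\equiv\vec{w}\cdot\vec{y}\pmod{2\pi}$ and gives a contradiction for small $c$; and nonconstancy of $\Phi$ should be secured by taking $\psi$ an approximate identity with small support, so that $\varphi*\psi\to\varphi$ uniformly on compacta and some pair of values $\varphi(u)\neq\varphi(v)$ survives mollification.
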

This makes quantum reuploading a compelling candidate for integration into our hybrid SQDR-CNN model, with potential implications for future PQC advancements.

In our proposed model, we construct the $n$-qubit quantum data reupload block, denoted as $\mathcal{DR}$, using $Rot$ gates \citep{nielsen2010quantum}. Traditionally, these gates perform arbitrary single-qubit rotations, defined as:
\begin{equation}
\begin{aligned}
    & U(\phi_N) = Rot(\alpha, \beta, \gamma, \sigma) = e^{i\alpha}RZ(\beta) RY(\gamma) RZ(\sigma) \\
    &=
        \begin{bmatrix}
        e^{i(\alpha-\beta/2-\sigma/2)} \cos(\gamma/2) & -e^{i(\alpha-\beta/2+\sigma/2)} \sin(\gamma/2) \\
        e^{i(\alpha+\beta/2-\sigma/2)} \sin(\gamma/2) & e^{i(\alpha+\beta/2+\sigma/2)} \cos(\gamma/2)
        \end{bmatrix}
\end{aligned}
\end{equation}
to encode entries of input feature vectors and construct optimizable parameterized gates. In our experiments, we used a quantized version of the $Rot$ gate to reduce parameter count and computational load \citep{bergholm2018pennylane}, defined as:
\begin{equation}
\begin{aligned}
    & U(\phi_N) = Rot(\omega, \theta, \phi) = RZ(\omega) RY(\theta) RZ(\phi) \\
    &=
        \begin{bmatrix}
        e^{-i(\phi+\omega)/2} \cos(\theta/2) & -e^{i(\phi-\omega)/2} \sin(\theta/2) \\
        e^{-i(\phi-\omega)/2} \sin(\theta/2) & e^{i(\phi+\omega)/2} \cos(\theta/2)
        \end{bmatrix}.
\end{aligned}
\end{equation}
These rotations are followed by a series of circularly entangled controlled-Z ($CZ$) gates. For a more detailed illustration, refer to Figure \ref{fig:dr_block}.

\begin{figure}[htbp]
\centerline{\includegraphics[width=0.8\linewidth]{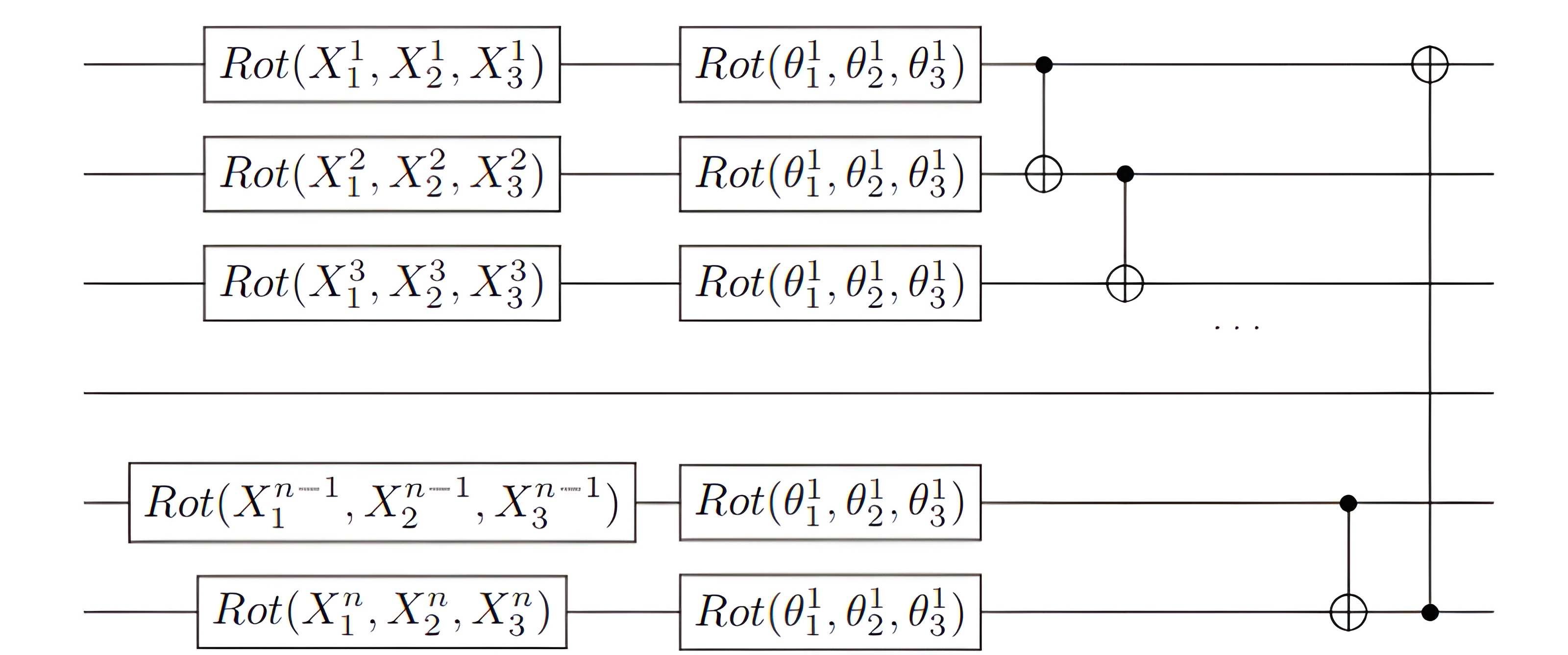}}
\caption{Architecture of an $n$ qubit data reupload $\mathcal{DR}$ block used in our experiment. First layer of $Rot$ gates are used to encode data from feature vector $X$ received from image encoder while gates in the second layer serve as trainable parameters, followed by circularly arranged $CZ$ gates.}
\label{fig:dr_block}
\end{figure}

To make our PQC scalable not only in terms of the number of qubits but also in terms of gate depth, we stack multiple $\mathcal{DR}$ blocks sequentially, thereby enhancing the model’s expressive power and processing capability (see Figure \ref{fig:circuit} for details). After the quantum transformation, we extract classical information from the quantum state by performing measurements in the computational basis using the Pauli-Z observable. For an $n$-qubit PQC, the Pauli-Z measurement on qubit \( j \) corresponds to measuring the observable:
\begin{equation}
    \begin{aligned}
        Z &= \begin{bmatrix} 1 & 0 \\ 0 & -1 \end{bmatrix}, \quad Z_j = I \otimes \cdots \otimes I \otimes Z \otimes I \otimes \cdots \otimes I = I^{\otimes (j-1)} \otimes Z \otimes I^{\otimes (n-j)}.
    \end{aligned}
\end{equation}

\begin{figure}[t]
\centerline{\includegraphics[width=0.8\linewidth]{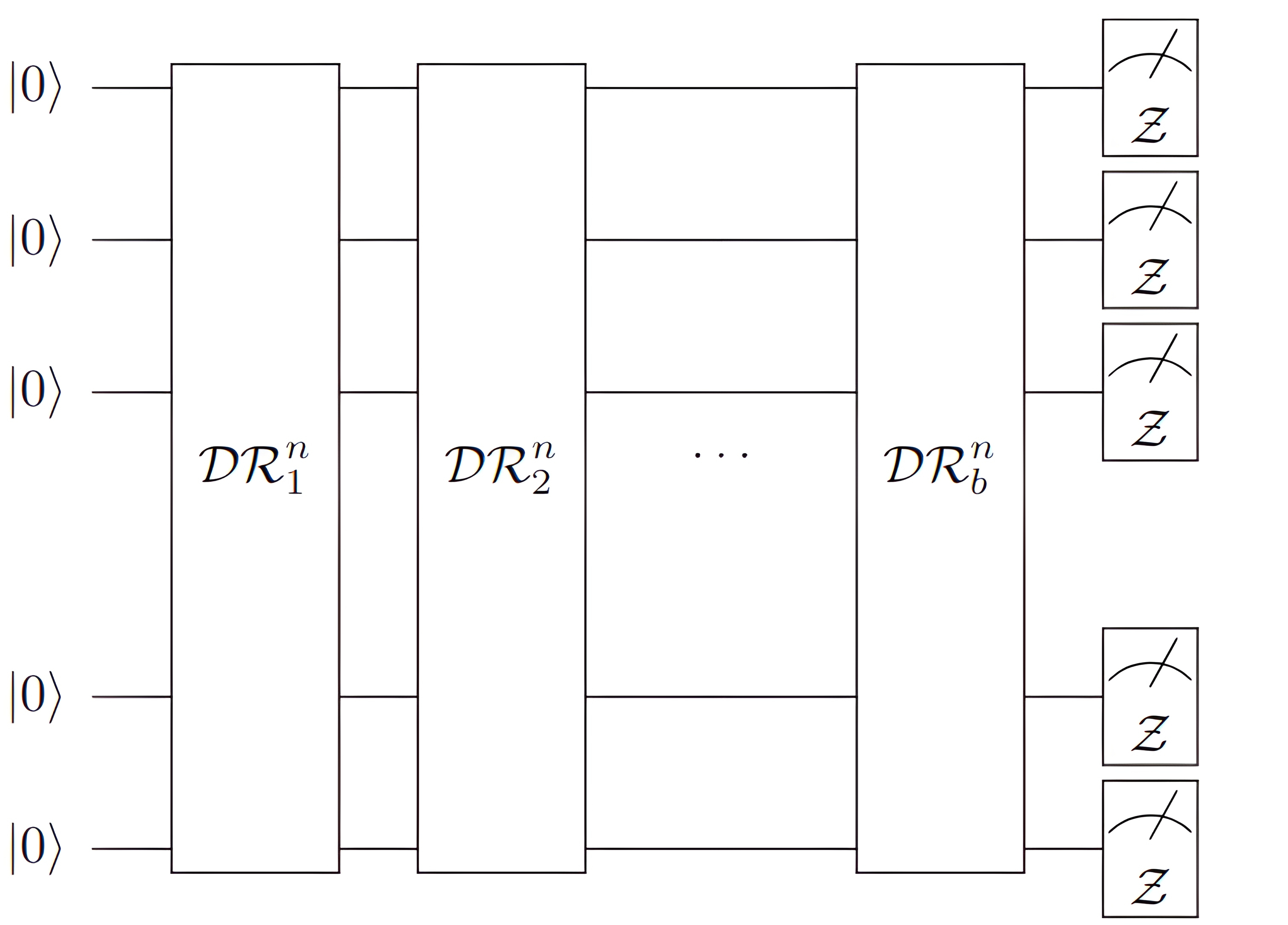}}
\caption{Architecture of the PQC used in our model, consist of $b$ blocks of $n$ qubit data reupload $\mathcal{DR}$ layer. Qubits are then measured using Pauli-Z basis to obtain results for classical computing.}
\label{fig:circuit}
\end{figure}

Aside from theoretical justification for PQC selection, we also practically justify it by performing evaluation against established PQC architectures on MNIST \citep{deng2012mnist} dataset. As summarized in Table \ref{tab:pqc_compare}, our empirical results demonstrate that the data re-upload circuit consistently outperformed the other PQCs across multiple tasks, followed by SQNN's PQC implementation and HEA with lowest performance. Average performance gap range from 1.7\% (between SQNN's PQC and HEA) to 3.61\% (between data re-upload and HEA).

\begin{table}[t]
    \centering
    \caption{Comparison of SQDR-CNN validation accuracy on MNIST \citep{deng2012mnist} using various PQC architecture, best performances are denoted in bold. Quantum reupload \citep{perez2020data} as proposed in our paper performed the best.}
    \resizebox{\textwidth}{!}{
    \begin{tabular}{|M{7cm}|*{4}{M{1.6cm}|}} 
    \hline
     \multirow{2}{*}{Variants} & \multicolumn{4}{c|}{Validation accuracy (\%)} \\
    \cline{2-5}
     & MNIST \citep{deng2012mnist}& Fashion-MNIST \citep{xiao2017fashion} & KMNIST \citep{clanuwat2018deep} & Average\\
    \hline
    HEA(9-qubit) \citep{kandala2017hardware} & 80.30 & 72.85 & 76.26 & 76.47\\
    \hline
    SQNN(9-qubit) \citep{konar2023shallow} & 82.45 & 74.50 & 77.55 & 78.17\\
    \hline
    Quantum data-reupload[2b-9q] \citep{perez2020data} & \textbf{84.03} & \textbf{76.18} & \textbf{80.04} & \textbf{80.08}\\
    \hline
    \end{tabular}}
    \label{tab:pqc_compare}
\end{table}

\subsection{Post processing with classical multi-layer perceptron (MLP)}
Following the PQC processing stage, the resulting output feature vector is passed to a classical MLP for further transformation and prediction. Although this practice while has been extensively explored in prior research \cite{konar2023shallow, alam2022deepqmlp}, theoretical basis for this practice was not clearly explained. We hypothesize that incorporating a trainable classical layer after the PQC offers two key advantages: 
\begin{itemize}
    \item It introduces additional learnable parameters that can enhance model expressivity and overall performance.
    \item The presence of a post-processing MLP layer enables effective gradient rescaling during backpropagation. This can amplify variance in quantum gradients and help reduce the the barren plateau phenomenon \cite{cerezo2021cost} often observed in PQC training.
\end{itemize} 
To further solidify our argument, we will proceed to prove the mentioned views analytically. First, we define some terms that would be used in our analysis. Quantum barren plateau can be formally quantified as \citep{larocca2025barren}:
\begin{definition}[Probabilistic concentration of weight gradient]
The resulted measurement $f_{\theta}(\rho, O)$ of input state $\rho$ and observable $O$ of a $n$-qubit PQC is said to exhibits a probabilistic barren plateau over an exponent base $b$ if:
    \begin{equation}
        \mathrm{Var}_{\theta}[f_{\theta}(\rho,O)] \;\; \text{or} \;\; \mathrm{Var}_{\theta}[\partial_{\mu}f_{\theta}(\rho,O)] \in \mathcal{O}\left(\frac{1}{b^n}\right),
        \label{eq:prob_bp}
    \end{equation}
for all parameter $\theta$ or a subset of parameters $\theta_\mu \in \theta$ and the value of $b$ depend on initial state $\rho$ and observable $O$.
\label{barren_plateau_def}
\end{definition}
We can then proceed to prove our statement as follow:
\begin{theorem}[Concentration reduction over shared variance]
Given a PQC with $\phi_i$ is the $i$-th $\in n$ parameter of the PQC, and $\psi_{out}(\phi, x)$ represents the applied unitary transformations in the quantum circuit, and $\phi_i$ gradient is derived through parameter shift rule \citep{wierichs2022general} as:
\begin{equation}
    \begin{aligned}
        &\frac{\partial \mathcal{L}}{\partial \phi_i} = \frac{\partial \mathcal{L}}{\partial y} \cdot \frac{\partial y}{\partial f} \cdot \frac{\partial f}{\partial \phi_i}, \\
        &\frac{\partial y(f(\phi, x))}{\partial f(\phi, x)} = \frac{\partial (\boldsymbol{W}^\top f(\phi, x) + \boldsymbol{b})}{\partial f(\phi, x)} = \boldsymbol{W}^\top, \\
        & \frac{\partial f(\phi, x)}{\partial \phi_i}  =  \frac{\partial \langle \psi_{out}(\phi_i, x) | \hat{O} | \psi_{out}(\phi_i, x) \rangle}{\partial \phi_i} = \frac{1}{2} \left[ f(\phi_{+}^{(i)}, x) - f(\phi_{-}^{(i)}, x) \right],\\
    \end{aligned}
\end{equation}
where $\mathcal{L}$ denotes the loss function, $y$ is the MLP output, $f(\phi, x)$ is the positive operator-valued measure (POVM) with POVM element $\hat{O}$. Assuming $\operatorname{Var}\left(\frac{\partial y(f(\phi, x))}{\partial f(\phi, x)}\right) + \mathbb{E}\left[\frac{\partial y(f(\phi, x))}{\partial f(\phi, x)}\right]^2 \geq 1$ hold and $\frac{\partial y(f(\phi, x))}{\partial f(\phi, x)}$ and $\frac{\partial f(\phi, x)}{\partial \phi_i}$ are independent variable, then we would have:
\begin{equation}
    \begin{aligned}
        & \mathrm{Var}\left[\frac{\partial y(f(\phi, x))}{\partial f(\phi, x)}\cdot\frac{\partial f(\phi, x)}{\partial \phi_i}\right] \geq \mathrm{Var}\left[\frac{\partial f(\phi, x)}{\partial \phi_i}\right]\\
    \end{aligned}
    \label{grad_w}
\end{equation}
\end{theorem}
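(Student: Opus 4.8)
The plan is to treat the two chain-rule factors as scalar random variables over the joint initialization distribution of the classical weights $\boldsymbol{W}$ and the circuit parameters $\phi$, and then reduce the claim to an elementary second-moment inequality. Writing $A = \partial y/\partial f = \boldsymbol{W}^\top$ and $B = \partial f/\partial \phi_i$, the target inequality \eqref{grad_w} reads $\mathrm{Var}[AB] \geq \mathrm{Var}[B]$. The hypothesis that $A$ and $B$ are independent (justified because $\boldsymbol{W}$ and $\phi$ are drawn from separate initialization schemes) is what makes the moments of the product factorize, so I would invoke it at the very first step.

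First I would expand the variance of the product using independence. Since $A$ and $B$ independent implies $A^2$ and $B^2$ independent, and $\mathbb{E}[AB] = \mathbb{E}[A]\,\mathbb{E}[B]$, I obtain
\begin{equation}
    \mathrm{Var}[AB] = \mathbb{E}[A^2]\,\mathbb{E}[B^2] - \mathbb{E}[A]^2\,\mathbb{E}[B]^2,
\end{equation}
while $\mathrm{Var}[B] = \mathbb{E}[B^2] - \mathbb{E}[B]^2$. Introducing shorthand $a_2 = \mathbb{E}[A^2]$, $a_1 = \mathbb{E}[A]$, $b_2 = \mathbb{E}[B^2]$, $b_1 = \mathbb{E}[B]$, the quantity whose sign I must determine becomes $\mathrm{Var}[AB] - \mathrm{Var}[B] = b_2(a_2 - 1) - b_1^2(a_1^2 - 1)$.

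The core of the argument is then a short chain of inequalities resting on three facts: the standing assumption $a_2 = \mathrm{Var}(A) + \mathbb{E}[A]^2 \geq 1$, so that $a_2 - 1 \geq 0$; the non-negativity of $\mathrm{Var}(B)$, giving $b_2 \geq b_1^2 \geq 0$; and the non-negativity of $\mathrm{Var}(A)$, giving $a_2 \geq a_1^2$. Because $a_2 - 1 \geq 0$ and $b_2 \geq b_1^2$, I can lower-bound $b_2(a_2-1) \geq b_1^2(a_2-1)$, and then $b_1^2(a_2-1) - b_1^2(a_1^2-1) = b_1^2(a_2 - a_1^2) \geq 0$. Chaining these gives $b_2(a_2-1) \geq b_1^2(a_1^2-1)$, which is exactly the claim.

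I expect the only genuine subtlety—rather than a real obstacle—to be bookkeeping around the vector/operator nature of the factors: $\partial y/\partial f = \boldsymbol{W}^\top$ and the POVM output $f$ are multi-component, so to speak of scalar variances I would fix the interpretation (e.g. a fixed output-component / qubit pair, or a suitable scalar aggregate) and make explicit that the independence and second-moment hypotheses are assumed componentwise. With that interpretation pinned down the remaining steps are routine, and it is worth emphasizing that the assumption $\mathbb{E}[A^2]\geq 1$ does all the real work: it encodes precisely that the post-PQC linear map does not shrink the second moment of the quantum gradient, which is what allows the shared variance to relax the concentration condition of Definition~\ref{barren_plateau_def}.
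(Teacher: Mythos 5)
Your proof is correct and takes essentially the same route as the paper's: both expand $\mathrm{Var}[AB]$ via the independence product formula and reduce the claim to the standing assumption $\mathbb{E}[A^2]=\operatorname{Var}(A)+\mathbb{E}[A]^2\geq 1$ together with the non-negativity of the two variances. The only differences are cosmetic—you work with raw second moments and conclude via $b_2(a_2-1)-b_1^2(a_1^2-1)\geq 0$, whereas the paper rearranges the identical quantity into $\operatorname{Var}(B)\left(\operatorname{Var}(A)+\mathbb{E}[A]^2-1\right)\geq-\operatorname{Var}(A)\,\mathbb{E}[B]^2$ after first writing a covariance expansion that independence immediately annihilates (a step you sensibly skip).
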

\begin{proof}
Deriving $\mathrm{Var}\left[\frac{\partial y(f(\phi, x))}{\partial f(\phi, x)}\cdot\frac{\partial f(\phi, x)}{\partial \phi_i}\right]$, we would have:
\begin{equation}
    \begin{aligned}
        & \mathrm{Var}\left[\frac{\partial y(f(\phi, x))}{\partial f(\phi, x)}\cdot\frac{\partial f(\phi, x)}{\partial \phi_i}\right] \\
        = \quad & \mathrm{Cov}\left(\left(\frac{\partial y(f(\phi, x))}{\partial f(\phi, x)}\right)^2, \left(\frac{\partial f(\phi, x)}{\partial \phi_i}\right)^2\right)\\
        & + \left(\operatorname{Var}\left(\frac{\partial y(f(\phi, x))}{\partial f(\phi, x)}\right) + \mathbb{E}\left[\frac{\partial y(f(\phi, x))}{\partial f(\phi, x)}\right]^2\right) \cdot \left(\operatorname{Var}\left(\frac{\partial f(\phi, x)}{\partial \phi_i}\right) + \mathbb{E}\left[\frac{\partial f(\phi, x)}{\partial \phi_i}\right]^2\right)\\
        & -  \left[\mathrm{Cov}\left(\left(\frac{\partial y(f(\phi, x))}{\partial f(\phi, x)}\right)^2, \left(\frac{\partial f(\phi, x)}{\partial \phi_i}\right)^2\right) + \mathbb{E}\left[\frac{\partial y(f(\phi, x))}{\partial f(\phi, x)}\right]\mathbb{E}\left[\frac{\partial f(\phi, x)}{\partial \phi_i}\right]\right]^2\\
    \end{aligned}
\end{equation}
Since $\frac{\partial y(f(\phi, x))}{\partial f(\phi, x)}$ and $\frac{\partial f(\phi, x)}{\partial \phi_i}$ is independent, this simplified to:
\begin{equation}
    \begin{aligned}
            & \mathrm{Var}\left[\frac{\partial y(f(\phi, x))}{\partial f(\phi, x)}\cdot\frac{\partial f(\phi, x)}{\partial \phi_i}\right]\\
            = \quad & \left( \operatorname{Var}\left(\frac{\partial y(f(\phi, x))}{\partial f(\phi, x)}\right) + \mathbb{E}\left[\frac{\partial y(f(\phi, x))}{\partial f(\phi, x)}\right]^2 \right)\cdot \left( \operatorname{Var}\left(\frac{\partial f(\phi, x)}{\partial \phi_i}\right) + \mathbb{E}\left[\frac{\partial f(\phi, x)}{\partial \phi_i}\right]^2 \right)\\
            & - \left[\mathbb{E}\left[\frac{\partial y(f(\phi, x))}{\partial f(\phi, x)}\right]\mathbb{E}\left[\frac{\partial f(\phi, x)}{\partial \phi_i}\right]\right]^2\\
            = \quad & \operatorname{Var}\left(\frac{\partial y(f(\phi, x))}{\partial f(\phi, x)}\right)\cdot \operatorname{Var}\left(\frac{\partial f(\phi, x)}{\partial \phi_i}\right) + \operatorname{Var}\left(\frac{\partial y(f(\phi, x))}{\partial f(\phi, x)}\right)\cdot \mathbb{E}\left[\frac{\partial f(\phi, x)}{\partial \phi_i}\right]^2\\
            & + \operatorname{Var}\left(\frac{\partial f(\phi, x)}{\partial \phi_i}\right)\cdot \mathbb{E}\left[\frac{\partial y(f(\phi, x))}{\partial f(\phi, x)}\right]^2\\
        \end{aligned}
\end{equation}
Then, decomposing the inequality we want to prove in theorem as:
\begin{equation}
    \begin{aligned}
        & \mathrm{Var}\left[\frac{\partial y(f(\phi, x))}{\partial f(\phi, x)}\cdot\frac{\partial f(\phi, x)}{\partial \phi_i}\right] \geq \mathrm{Var}\left[\frac{\partial f(\phi, x)}{\partial \phi_i}\right]\\ \\
        \Leftrightarrow \quad & \left(\begin{aligned}
            & \operatorname{Var}\left(\frac{\partial y(f(\phi, x))}{\partial f(\phi, x)}\right)\cdot \operatorname{Var}\left(\frac{\partial f(\phi, x)}{\partial \phi_i}\right)\\
            & + \operatorname{Var}\left(\frac{\partial y(f(\phi, x))}{\partial f(\phi, x)}\right)\cdot \mathbb{E}\left[\frac{\partial f(\phi, x)}{\partial \phi_i}\right]^2\\
            & + \operatorname{Var}\left(\frac{\partial f(\phi, x)}{\partial \phi_i}\right)\cdot \mathbb{E}\left[\frac{\partial y(f(\phi, x))}{\partial f(\phi, x)}\right]^2\\
        \end{aligned}\right) \geq \mathrm{Var}\left[\frac{\partial f(\phi, x)}{\partial \phi_i}\right] \\ \\
        \Leftrightarrow \quad & \mathrm{Var}\left[\frac{\partial f(\phi, x)}{\partial \phi_i}\right] \left(\begin{aligned}
            & \mathrm{Var}\left[\frac{\partial y(f(\phi, x))}{\partial f(\phi, x)}\right]\\
            & + \mathbb{E}\left[\frac{\partial y(f(\phi, x))}{\partial f(\phi, x)}\right]^2 - 1\\
        \end{aligned}\right) \geq -\operatorname{Var}\left[\frac{\partial y(f(\phi, x))}{\partial f(\phi, x)}\right]\cdot\mathbb{E}\left[\frac{\partial f(\phi, x)}{\partial \phi_i}\right]^2\\
    \end{aligned}
\end{equation}

Given the initial condition $\operatorname{Var}\left(\frac{\partial y(f(\phi, x))}{\partial f(\phi, x)}\right) + \mathbb{E}\left[\frac{\partial y(f(\phi, x))}{\partial f(\phi, x)}\right]^2 \geq 1$, the inequality always hold.\footnote{This initial condition can be considered somewhat reasonable, since complete saturation of gradient magnitude and variance are relatively rare in the context of ANN stochastic optimization \citep{gorbunov2020unified}.}
\end{proof}

As we can see from Equation \ref{grad_w}, optimized PQC gradient scale with post-processing MLP layer gradient $\partial y / \partial f$, which could contribute to rescaling PQC variance and lessen quantum barren plateau effect on PQC weights as long as gradient saturation does not happen. 

\begin{figure}[t]
\centerline{\includegraphics[width=\linewidth]{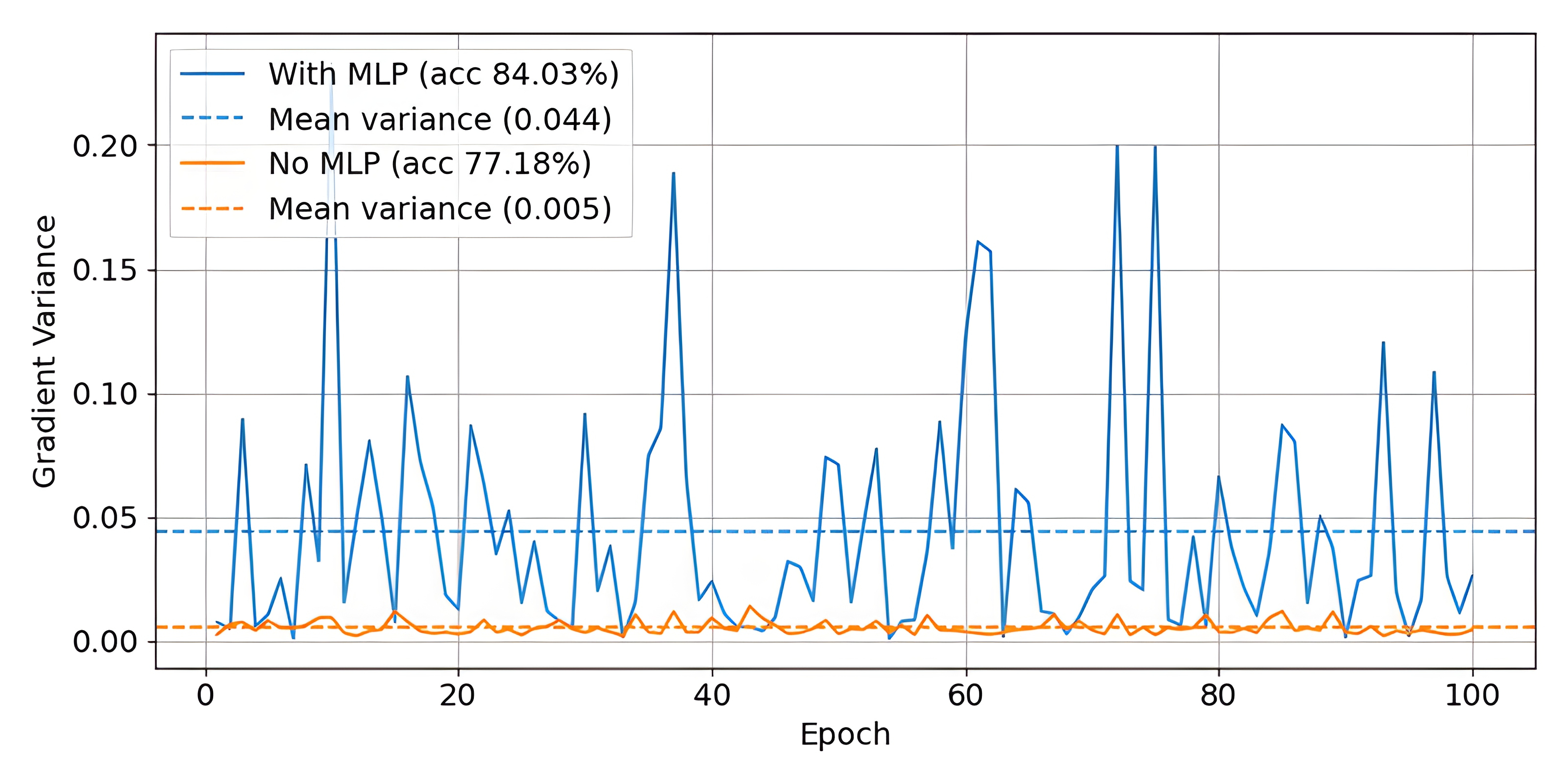}}
\caption{Comparison of PQC layer gradient variance when computed with and without MLP post-processing when training on MNIST \citep{deng2012mnist}. Dashed line denote mean value across all training epoches.}
\label{fig:norm_compare}
\end{figure}

To empirically validate our hypothesis, we measured the gradient variance of the PQC under two conditions: with and without the inclusion of a classical MLP as a post-processing module (see Figure~\ref{fig:norm_compare} for details). The results indicate that models incorporating the MLP not only achieve higher classification accuracy, but also exhibit consistently larger gradient variance within the PQC layers. This observation aligns with our hypothesis that the MLP facilitates gradient flow and enhances trainability of the quantum circuit.

\section{Experimental settings}
Majority of experiments were conducted using the PyTorch framework~ \citep{paszke2017automatic}, with SpikingJelly~ \citep{fang2023spikingjelly} employed for the SNN encoder. Quantum circuit simulations were performed using PennyLane~ \citep{bergholm2018pennylane} both for noiseless and noisy simulations.

For data preprocessing method, we performed image normalization within the range of [0, 1], employed horizontal random flip during training as augmentation method \citep{wang2017effectiveness} for better generalization and image resolution was kept as in original dataset. For spike-based encoding, we adopted the \textit{phase coding} scheme~ \citep{kim2018deep} with a time step size of \( T = 10 \), the corresponding spike weight \( \omega(t) \) is defined as:
\[
\omega(t) = 2^{-(1 + \operatorname{mod}(t - 1, K))}.
\]

Model optimization follows conventional deep learning practices~ \citep{luu2023blind, luu2024, luu2024improvement}, using the Adam optimizer~ \citep{kingma2014adam} with a learning rate of \( \text{lr} = 10^{-3} \), weight decay $\gamma= 10^{-3}$ and momentum parameters \( \beta = (0.9, 0.999) \), consistently applied across all datasets.

The loss function used was the Cross-Entropy Loss with Softmax activation and mean reduction over mini-batches~ \citep{paszke2017automatic}:
\begin{equation}
\begin{aligned}
    L(x, y) &= \frac{1}{N} \sum_{n=1}^{N} l_n(x, y), \\
    l_n(x, y) &= -\sum_{c=1}^{C} w_c \log\left( \frac{\exp(x_{n,c})}{\sum_{i=1}^{C} \exp(x_{n,i})} \right) y_{n,c}
\end{aligned}
\end{equation}
Here, \( x \) and \( y \) denote the model output and ground-truth labels respectively, \( w \) is the class-specific weighting factor, \( C \) is the total number of classes, and \( N \) is the mini-batch size.

Dataset splits follow the original protocols as described in their respective publications. For datasets without predefined splits, an 80/20 training/validation ratio is applied. All experiments were trained with fixed random seeds \citep{picard2021torch} for reproducibility for 100 epochs on an NVIDIA RTX 3060 GPU 12 GB.

To asses the effectiveness of our model in comparison with other model in image classification, the \textit{Accuracy} is one of the most commonly used metrics to evaluate the performance of a model. It measures the proportion of correctly predicted labels $y_i$ over the total number of predictions $\hat{y}_i$ within $N$ samples and can be computed as:
\[
\textit{Accuracy} = \frac{1}{N} \sum_{i=1}^{N} \mathbb{1}(\hat{y}_i = y_i)
\]
where $\mathbb{1}(\cdot)$ is the indicator function that returns 1 if the condition is true, and 0 otherwise.

\section{Results}

\subsection{Noiseless benchmarking with SOTA methods}

To prove that SQDR-CNN can utilize quantum supremacy to compete with traditional SNN solutions, we evaluate our architecture in noiseless scenario against a selection of prominent SOTA models, include ST-RSBP \citep{zhang2019spike}, SEW-ResNet18 \citep{fang2021deep}, LISNN \citep{cheng2020lisnn}, PLIF \citep{fang2021incorporating} and Spiking-ResNet \citep{hu2021spiking}. For each SQDR-CNN variant we adopt the notation “\textit{x}b–\textit{y}q”, where \textit{x} denotes the number of data‐reupload blocks and \textit{y} indicates the number of qubits employed in the PQC. We conduct our experiments with cross-dataset testing on three well‐known grayscale image classification benchmarks: MNIST \citep{deng2012mnist}, Fashion-MNIST \citep{xiao2017fashion}, and KMNIST \citep{clanuwat2018deep}. These datasets provide a spectrum of complexity, from the relatively simple digit recognition of MNIST \citep{deng2012mnist} to the more challenging fashion object categories of Fashion-MNIST \citep{xiao2017fashion} and the character classes of KMNIST \citep{clanuwat2018deep}. By evaluating across all three, we ensure a comprehensive assessment of model robustness and capacity (detail shown in Table \ref{tab:model_compare}).

\begin{table*}[htb!]
    \centering
    \caption{Comparison of validation accuracy of our model variants on different image classification datasets with existing SOTA ANN and SNN architecture using spike-based backpropagation method (entry marked with N/A indicate data not available). We were able to achieve reasonable performance with significant parameter reduction.}
    \resizebox{\textwidth}{!}{
    \begin{tabular}{|M{6cm}|M{2cm}|*{3}{M{1.8cm}|}} 
    \hline
     \multirow{4}{*}{Model variants} & \multirow{4}{*}{\shortstack[c]{Optimizable\\parameters}} & \multicolumn{3}{c|}{Validation accuracy (\%)} \\
    \cline{3-5}
     & & MNIST \citep{deng2012mnist}& Fashion-MNIST \citep{xiao2017fashion} & KMNIST \citep{clanuwat2018deep}\\
    \hline
    ANN-ResNet18 \citep{he2016deep} & 11.8M & 98.89 & 90.12 & 95.27\\
    \hline
    ANN-ResNet34 \citep{he2016deep} & 21.8M & 98.01 & 91.13 & 90.76\\
    \hline
    \hline
    ST-RSBP (400-R400) \citep{zhang2019spike} & 478,841 &  N/A & 90.13 & N/A\\
    \hline
    ST-RSBP (15C5-P2-40C5-P2-300) \citep{zhang2019spike} & 784,480 & 99.62 & N/A & N/A\\
    \hline
    SEW-ResNet18 (ADD) \citep{fang2021deep} \citep{luu2024improvement} & 11.8M & 98.62 & 88.83 & 94.12\\
    \hline
    LISNN \citep{cheng2020lisnn} & 272,416 & 99.05 & 92.07 & N/A\\
    \hline
    PLIF \citep{fang2021incorporating} & 13.2M &  99.72 & 94.38 & N/A\\
    \hline
    Spiking-ResNet34 \citep{hu2021spiking} & 21.8M &  69.02 & 69.59 & 37.45\\
    \hline
    Spiking-ResNet50 \citep{hu2021spiking} & 25.6M &  13.45 & 12.80 & 10.44\\
    \hline
    \hline
    \textbf{SQDR-CNN[2b-9q] (Ours)} & 803 & 84.03 & 76.18 & 80.04\\
    \hline
    \textbf{SQDR-CNN[4b-9q] (Ours)} & 857 & 88.32 & 79.99 & 84.36\\
    \hline
    \textbf{SQDR-CNN[2b-18q] (Ours)} & 1,190 & 86.55 & 77.80 & 82.01\\
    \hline
    \textbf{SQDR-CNN[4b-18q] (Ours)} & 1,298 & 88.32 & 82.18 & 83.93\\
    \hline
    \end{tabular}}
    \label{tab:model_compare}
\end{table*}

Overall, all SQDR-CNN variants achieve competitive performance while drastically reducing parameter counts. Notably, on MNIST \citep{deng2012mnist} our best model, SQDR-CNN[4b–18q], reaches approximately 86\% of the mean top-performing accuracy of the SOTA baselines (PLIF with 99.72\%), yet uses only 0.5\% of the smallest SOTA model’s parameters (99.5\% size reduction compared to LISNN). It is also noteworthy that all SQDR-CNN variants can outperform the well known architecture Spiking-Resnet variants in term of converged performance with a gap as low as 6.59\% when compare SQDR-CNN[2b-9q] with Spiking-ResNet34 in Fashion-MNIST \citep{xiao2017fashion} and as high as 73.92\% when compare SQDR-CNN[4b-9q] with Spiking-ResNet50 in KMNIST \citep{clanuwat2018deep}. Conversely, the lowest relative performance is observed on Fashion-MNIST \citep{xiao2017fashion}, where SQDR-CNN[2b–9q] attains roughly 78\% of PLIF’s accuracy while employing merely 0.3\% of LISNN’s parameter count (a 99.7\% reduction). Despite this drop, the substantial model size compression achieved across all datasets highlights the potential of SQDR-CNNs for resource‐constrained applications.

\subsection{Benchmarking with noisy quantum gate}

In addition to evaluating performance under ideal (noiseless) conditions, we also conducted benchmarking in the presence of noise, which more accurately reflects the practical operational environment of current quantum hardware. As noted by \citet{chen2023complexity}, we can formally define a NISQ algorithm utilized in quantum computing simulation as follow:
\begin{definition}[NISQ algorithm]
A NISQ$_\lambda$ algorithm with access to $\lambda$-noisy quantum circuits is defined as a probabilistic Turing machine $M$ that can query $\mathrm{NQC}_\lambda$ to obtain an output bitstring $s$ for any number of times, and is denoted as $A_\lambda \triangleq M^{\mathrm{NQC}_\lambda}$. The runtime of $A_\lambda$ is given by the classical runtime of $M$ plus the sum of the times to query $\mathrm{NQC}_\lambda$.
\end{definition}
Aside from being perceivable as probabilistic Turing machine, the NISQ complexity class for decision problems \citep{chen2023complexity} can also be quantified as:
\begin{definition}[NISQ complexity]
A language $L \subseteq \{0,1\}^*$ is in NISQ if there exists a NISQ$_\lambda$ algorithm $A_\lambda$ for some constant $\lambda > 0$ that decides $L$ in polynomial time, that is, such that:
\begin{itemize}[leftmargin =*]
    \item for all $x \in \{0,1\}^*$, $A_\lambda$ produces an output in time $\mathrm{poly}(|x|)$, where $|x|$ is the length of $x$;
    \item for all $x \in L$, $A_\lambda$ outputs $1$ with probability at least $2/3$;
    \item for all $x \notin L$, $A_\lambda$ outputs $0$ with probability at least $2/3$.
\end{itemize}
\end{definition}
Simulated noise models were applied consistently to all applicable gates within the PQC, allowing us to assess the robustness of our architecture under realistic noise conditions.

\subsubsection{Bit flip error channel}
The bit flip error is one of the simplest and most fundamental quantum errors. It models the classical bit-flipping behavior where a qubit in state $\ket{0}$ is flipped to $\ket{1}$, and vice versa. This channel is modeled by the following Kraus matrices:
\begin{equation}
    \begin{aligned}
        K_0 &= \sqrt{1-p} \begin{bmatrix}
                1 & 0 \\
                0 & 1
                \end{bmatrix},\quad K_1 = \sqrt{p}\begin{bmatrix}
                0 & 1  \\
                1 & 0
                \end{bmatrix},\\
    \end{aligned}
\end{equation}
where $p \in [0,1]$ is the probability of a bit flip. The error channel $\varepsilon$ can be represented as $\varepsilon(\rho) = K_0\rho K^{\dag}_0 + K_1\rho K^{\dag}_1$.

\subsubsection{Symmetrically depolarizing error channel}
The depolarizing error is a common quantum noise model that represents the complete loss of information in a qubit, where the state becomes a completely mixed state with some probability. The depolarizing channel can also be written in Kraus operator form using Pauli $X$, $Y$, $Z$ and identity matrices $I$ as following operators:
\begin{equation}
    \begin{aligned}
        K_0 &= \sqrt{1-p} \begin{bmatrix}
                1 & 0 \\
                0 & 1
                \end{bmatrix}, \quad K_1 = \sqrt{\frac{p}{3}}\begin{bmatrix}
                0 & 1  \\
                1 & 0
                \end{bmatrix},\\
        K_2 &= \sqrt{\frac{p}{3}} \begin{bmatrix}
                0 & -i \\
                i & 0
                \end{bmatrix},\quad K_3 = \sqrt{\frac{p}{3}}\begin{bmatrix}
                1 & 0  \\
                0 & -1
                \end{bmatrix},\\
    \end{aligned}
\end{equation}
where $p \in [0,1]$ is the depolarization probability. The error channel $\varepsilon$ can be represented as $\varepsilon(\rho) = \sum_{i=0}^{3} K_i\rho K^{\dag}_i$.
\subsubsection{Amplitude damping channel}
The single-qubit amplitude damping channel models the energy dissipation from a quantum system, commonly used to represent processes like spontaneous emission (e.g., excitepd state $\ket{1}$ decaying to ground state $\ket{0}$). It can be modeled by the amplitude damping channel, with the following Kraus matrices:
\begin{equation}
    \begin{aligned}
        K_0 &= \begin{bmatrix}
                1 & 0 \\
                0 & \sqrt{1-\gamma}
                \end{bmatrix}, \quad K_1 = \begin{bmatrix}
                0 & \sqrt{\gamma}  \\
                0 & 0
                \end{bmatrix},\\
    \end{aligned}
\end{equation}
where $\gamma \in [0,1]$ is the amplitude damping probability. The error channel $\varepsilon$ can be represented as $\varepsilon(\rho) = K_0\rho K^{\dag}_0 + K_1\rho K^{\dag}_1$.

In our experiments, we incorporated gate-level noise into the simulations using a small error probability of \( p \sim 2\%\text{--}3\% \) for all quantum gates. As observed in Table \ref{tab:noise_compare}, the SQDR-CNN[4b–9q] variant suffered the most significant performance degradation under these conditions. This suggests that employing a deep PQC with a limited number of qubits tends to yield suboptimal performance on NISQ devices. This aligns with prior findings in the literature \citep{hanruiwang2022quantumnas}, which highlight how noise and gate errors can introduce large discrepancies between ideal and realistic quantum circuit behavior, particularly in deeper PQCs.

\begin{table}[t]
    \centering
    \caption{Comparison of SQDR-CNN variants validation accuracy with various simulated noises.}
    \resizebox{\textwidth}{!}{
    \begin{tabular}{|M{3.5cm}|M{3cm}|M{3cm}|M{3cm}|} 
    \hline
     \multirow{2}{*}{Variants} & \multicolumn{3}{c|}{Validation accuracy (\%)} \\
    \cline{2-4}
     & MNIST \citep{deng2012mnist} & Fashion-MNIST \citep{xiao2017fashion} & KMNIST \citep{clanuwat2018deep}\\
    \hline
    SQDR-CNN[2b-9q] & 82.67 & 73.86 & 77.54\\
    \hline
    SQDR-CNN[4b-9q] & 81.95 & 72.52 & 77.17\\
    \hline
    SQDR-CNN[2b-18q] & 84.36 & 74.46 & 78.47\\
    \hline
    SQDR-CNN[4b-18q] & \textbf{84.70} &  \textbf{74.97} & \textbf{79.41}\\
    \hline
    \end{tabular}}
    \label{tab:noise_compare}
\end{table}

Interestingly, the accuracy results also indicate that when the model is equipped with a sufficient number of trainable parameters to counteract the effects of gate noise—as exemplified by the SQDR-CNN[4b–18q] variant—the performance impact from simulated noise is significantly mitigated. These results emphasize the importance of balancing PQC depth with qubit count to maintain robustness against hardware-induced noise in NISQ-era quantum machine learning applications.

\subsection{Benchmarking with various optimization algorithms and PQC initialization methods}

In addition to benchmarking our design across a variety of datasets, we conducted a comparative study using several well-known optimization algorithms from the traditional deep learning literature (detail in Table \ref{tab:opt_compare}). Specifically, we evaluated AdamW~ \citep{loshchilov2017decoupled}, stochastic gradient descent (SGD)~ \citep{ruder2016overview}, stochastic gradient descent with warm restarts (SGDR)~ \citep{loshchilov2016sgdr}, and AdaGrad~ \citep{duchi2011adaptive}. For consistency, all optimizers were configured with an initial learning rate of \(\mathrm{lr} = 10^{-3}\) and a decay rate of \(\lambda = 10^{-3}\).

Previous researches \citep{kashif2024alleviating} had empirically shown that normal-like and uniform-like initialization in PQC tend to work better than other distribution due to a higher gradient variance. Accordingly, to assess the impact of parameter initialization on the overall performance of our PQC, we tested several common schemes that had been previously used in deep learning. We sampled from a standard normal distribution \(\mathcal{N}(0,1)\), as well as from uniform distributions \(\mathcal{U}\) over different truncated intervals: 
\begin{itemize}
    \item \(\mathcal{N}(0,1) \in [0, 2\pi]\) and \(\mathcal{U}(0,2\pi)\), corresponding to a full qubit revolution and aligned with common practices in quantum machine learning ~ \citep{bergholm2018pennylane, kashif2024alleviating} and 
    \item \(\mathcal{N}(0,1) \in [0, 1]\) and \(\mathcal{U}(0,1)\), reflecting initialization range previously studied in classical deep learning models \citep{sutskever2013importance}.
\end{itemize}

As shown in Table~\ref{tab:opt_compare}, Adam, the de facto standard optimizer in deep learning, consistently achieved the best performance across all tested initialization strategies. It was followed by AdamW, traditional SGD, SGDR, and AdaGrad. These results highlight the robustness of adaptive moment-based optimizers for training PQCs.

\begin{table*}[t]
    \centering
    \caption{Comparison of SQDR-CNN[2b-9q] validation accuracy on MNIST \citep{deng2012mnist} using various optimization algorithm and PQC truncated distribution initialization. On average, Adam \citep{kingma2014adam} with $\mathcal{U}(0, 2\pi)$ performed the best.}
    \resizebox{\textwidth}{!}{
    \begin{tabular}{|M{5.5cm}*{4}{|M{1.5cm}}|M{2cm}|} 
    \hline
     \multirow{2}{*}{Algorithm} & \multicolumn{5}{c|}{Validation accuracy (\%)} \\
    \cline{2-6}
     & $\mathcal{N}(0, 1) \in [0, 2\pi]$ & $\mathcal{U}(0, 2\pi)$ & $\mathcal{U}[0, 1]$ & $\mathcal{N}(0, 1) \in [0, 1]$ & \textbf{Average (Algorithm wise)}\\
    \hline
    Adam \citep{kingma2014adam} & 81.41 & \textbf{84.03} & 82.57 & 82.07 & \textbf{82.52}\\
    \hline
    AdamW \citep{loshchilov2017decoupled} & 58.40 & \textbf{60.57} & 58.45 & 54.99 & 58.10\\
    \hline
    SGD \citep{ruder2016overview} & 39.93 & \textbf{47.36} & 38.49 & 27.61 & 38.35\\
    \hline
    SGDR \citep{loshchilov2016sgdr} & 25.24 & \textbf{36.96} & 25.25 & 22.41 & 27.47\\
    \hline
    AdaGrad \citep{duchi2011adaptive} & \textbf{31.01} & 28.17 & 25.16 & 20.24 & 26.15\\
    \hline
    \textbf{Average (Initialization wise)} & 47.20 & \textbf{51.42} & 45.98 & 41.46 & \notableentry\\
    \hline
    \end{tabular}}
    \label{tab:opt_compare}
\end{table*}

Among the tested initialization methods, the truncated normal distribution \(\mathcal{U}(0, 2\pi)\) resulted in the highest average accuracy across all optimization algorithms, whereas \(\mathcal{N}(0,1) \in [0, 1]\) yielded the poorest performance. As we investigate further on how the quantum barren plateau affect PQC initialization in Figure \ref{fig:init_compare}, it is empirically clear that PQC gradient while decaying at an almost the same rate for all distribution (except for \(\mathcal{U}(0,1)\) decay slightly faster than other initialization method). \(\mathcal{U}(0, 2\pi)\) is the one with highest mean variance while \(\mathcal{U}(0,1)\) is the one with lowest mean variance upon higher qubit scale, correlated with barren plateau \citep{mcclean2018barren} theorem on PQC convergence. Furthermore, \(\mathcal{N}(0,1) \in [0, 1]\) while having the lowest variance on small qubit count also have a relatively low variance decay rate, indicate that if training with very high qubit count (well over 18 qubit) model might converge better than other distribution.

\begin{figure}[t]
\centerline{\includegraphics[width=\linewidth]{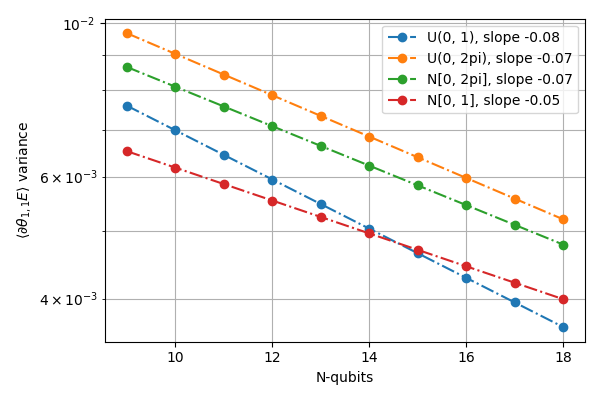}}
\caption{The comparison of mean gradient variance $\langle \partial \theta_{1, 1} E\rangle^2$ of SQDR-CNN[2b-9q] variant's PQC using different initialization methods (higher is better as n-qubit increase).}
\label{fig:init_compare}
\end{figure}
\section{Discussion}
\subsection{Limitations}

While our approach demonstrates promising performance with a minimal number of trainable parameters, several limitations remain evident. First, there is a clear trade-off between PQC size and classification accuracy. Although our model achieves competitive results at smaller scales, scaling to higher qubit counts proves to be nontrivial in both noisy and noiseless simulations. Within noisy simulations, our findings indicate that increasing the depth or width of PQCs does not consistently lead to improved performance. This highlights a key limitation of current hybrid quantum-classical models: their scalability is hindered by noise, barren plateaus, and the lack of efficient circuit design tailored to hybrid learning. Future progress in this area will require the development of more efficient PQC architectures and robust scaling strategies that can mitigate gradient vanishing while maintaining expressivity.

Despite our study provided insights into the potential of hybrid spiking-quantum architectures, our evaluation is limited to simulations and does not include experiments on actual hardware platforms. As a result, the performance of our model has not been validated on real quantum processors or specialized neuromorphic devices such as Intel Loihi or FPGA-based accelerators. Moreover, we did not benchmark against other specialized AI models designed for parameter-efficient learning, which could provide an important point of comparison for assessing scalability and efficiency. Addressing these limitations is an important direction for future work. In particular, more comprehensive benchmarking on practical hardware platforms will be crucial for establishing the applicability of hybrid spiking-quantum systems in real-world scenarios.

\subsection{Future Works}

Future research should focus on the design of hybrid models that scale effectively without disproportionately relying on one component of the architecture, such as the classical SNN encoder or the quantum PQC. Ideally, a balanced model should maintain performance robustness across varying circuit depths and noise levels. Additionally, efforts should be directed toward strategies that inherently reduce or bypass quantum errors and barren plateaus, potentially through the use of noise-resilient circuit templates, adaptive parameter initialization, or error mitigation techniques compatible with current NISQ devices.

\section{Conclusion}

In this work, we proposed SQDR-CNN, an extension of the SQNN framework that leverages spiking CNN encoder as feature extractors and quantum data re-upload as universal classifier. 
Through extensive experiments, our findings can be summarized as:
\begin{itemize}
    \item Proposed a new hybrid spiking-quantum model with quantum data-reupload using recent developments in SNN, allowing a smooth end-to-end optimization process.
    \item Resulted SQDR-CNN variants, while using less parameters than traditional SNN model, are able to deliver competitive performance against SOTA benchmarks without reducing training difficulties.
    \item We also evaluated our proposed model over various condition, including simulated noisy quantum circuit, different initialization settings and optimization algorithm to serve as a reference for future researches.
\end{itemize}  
We hope that our findings contribute to the growing body of research on hybrid quantum models and serve as a stepping stone toward more efficient, scalable, and practical quantum machine learning systems that harness the full potential of emerging quantum technologies.

\section*{Data availability}
Source code for reproducing our experiments is publicly available at \url{https://github.com/luutn2002/SQDR-CNN} (alternatively at Zenodo with DOI: \url{https://doi.org/10.5281/zenodo.17484676}). MNIST, Fashion-MNIST and KMNIST are publicly available at \url{http://yann.lecun.com/exdb/mnist/}, \url{https://github.com/zalandoresearch/fashion-mnist} and \url{https://github.com/rois-codh/kmnist}.

\bibliography{refs}

\end{document}